\documentclass{article} 
\usepackage{iclr2021_conference,times}


\usepackage{amsmath,amsfonts,bm}









\def\eqref#1{equation~\ref{#1}}









\def\1{\bm{1}}










\DeclareMathAlphabet{\mathsfit}{\encodingdefault}{\sfdefault}{m}{sl}
\SetMathAlphabet{\mathsfit}{bold}{\encodingdefault}{\sfdefault}{bx}{n}













\usepackage{hyperref}
\usepackage{url}

\usepackage{nicefrac}
\usepackage{bbm}
\usepackage[ruled,vlined]{algorithm2e}
\usepackage{amsthm} 
\usepackage{mathtools}
\newtheorem{definition}{Definition}
\newtheorem{theorem}{Theorem}
\newtheorem{proposition}{Proposition}

\newtheorem*{theorem*}{Theorem}
\newtheorem*{remark}{Remark}

\title{Predicting Classification Accuracy \\When Adding New Unobserved Classes}

\author{Yuli Slavutsky, Yuval Benjamini \\
Department of Statistics and Data Science\\
The Hebrew University of Jerusalem\\
Jerusalem, Israel \\
\texttt{\{yuli.slavutsky, yuval.benjamini\}@mail.huji.ac.il} \\
}

\iclrfinalcopy 
\begin{document}

\maketitle

\textbf{}
\global\long\def\acc{\mathcal{A}}%
\global\long\def\score{S}%
\global\long\def\wcdf{C_{x}}%
\global\long\def\expect{\mathbb{E}}%
\global\long\def\revaroc{\overline{\text{rROC}}}%
\global\long\def\revroc{\text{rROC}}%
\global\long\def\revtpr{\text{rTPR}}%
\global\long\def\revfpr{\text{rFPR}}%
\global\long\def\revauc{\text{rAUC}}%
\global\long\def\mcorrect{m}%
\global\long\def\acc{\mathcal{A}}%
\global\long\def\aga{\mathbb{E}_k[{\mathcal{A}}]}%
\global\long\def\agatwo{\mathbb{E}_{2}[{\mathcal{A}}]}%
\global\long\def\ata{\bar{\mathcal{A}}^{k_1}_k}%
\global\long\def\agahat{\hat{\mathbb{E}}_{k}[{\mathcal{A}}]}%
\global\long\def\agahatktwo{\hat{\mathbb{E}}_{k_2}[{\mathcal{A}}]}%
\global\long\def\correcty{y^*}
\global\long\def\correctY{Y^*}%
\global\long\def\indicator{\mathbbm{1}}%
\global\long\def\clx{\mathcal{Y}_{x}}%
\global\long\def\clX{\mathcal{Y}_{X}}%

\begin{abstract}
Multiclass classifiers are often designed and evaluated only on a sample from the classes on which they will eventually be applied. Hence, their final accuracy remains unknown. In this work we study how a classifier’s performance over the initial class sample can be used to extrapolate its expected accuracy on a larger, unobserved set of classes. For this, we define a measure of separation between correct and incorrect classes that is independent of the number of classes: the \emph{reversed ROC} (rROC), which is obtained by replacing the roles of classes and data-points in the common ROC. We show that the classification accuracy is a function of the rROC in multiclass classifiers, for which the learned representation of data from the initial class sample remains unchanged when new classes are added. Using these results we formulate a robust neural-network-based algorithm, \emph{CleaneX}, which learns to estimate the accuracy of such classifiers on arbitrarily large sets of classes. Unlike previous methods, our method uses both the observed accuracies of the classifier and densities of classification scores, and therefore achieves remarkably better predictions than current state-of-the-art methods on both simulations and real datasets of object detection, face recognition, and brain decoding.
\end{abstract}

\section{Introduction}
Advances in machine learning and representation learning led to automatic systems that can identify an individual class from very large candidate sets. Examples are abundant in visual object recognition \citep{russakovsky2015imagenet,simonyan2014very}, face identification \citep{liu2017sphereface}, and brain-machine interfaces \citep{naselaris2011encoding,seeliger2018convolutional}. 
In all of these domains, the possible set of classes is much larger than those observed at training or testing.

Acquiring and curating data is often the most expensive component in developing new recognition systems. A practitioner would prefer knowing early in the modeling process whether the data-collection apparatus and the classification algorithm are expected to meet the required accuracy levels. In large multi-class problems, this pilot data may contain considerably fewer classes than would be found when the system is deployed. (Consider for example the case in which researchers develop a face recognition system that is planned to be used on 10,000 people, but can only collect 1,000 in the initial development phase). This increase in the number of classes changes the difficulty of the classification problem and therefore the expected accuracy. The magnitude of change varies depending on the classification algorithm and the interactions between the classes: usually classification accuracy will deteriorate as the number of classes increases, but this deterioration varies across classifiers and data-distributions. For pilot experiments to work, theory and algorithms are needed to estimate how accuracy of multi-class classifiers is expected to change when the number of classes grows. In this work, we develop a prediction algorithm that observes the classification results for a small set of classes, and predicts the accuracy on larger class sets.

In large multiclass classification tasks, a representation is often learned on a set of $k_1$ classes, whereas the classifier is eventually used on a new larger class set. On the larger set, classification can be performed by applying simple procedures such as measuring the distances in an embedding space between the new example $x \in \mathcal{X}$ and labeled examples associated with the classes $y_i \in \mathcal{Y}$. Such classifiers, where the score assigned to a data point $x$ to belong to a class $y$ is independent of the other classes, are defined as \emph{marginal classifiers} \citep{yuval}. Their performance on the larger set describes how robust the learned representation is. Examples of classifiers that are marginal when used on a larger class set include siamese neural networks \citep{Siamese}, one-shot learning \citep{fei_2006} and approaches that directly optimize the embedding \citep{facenet}. Our goal in this work is to estimate how well a given marginal classifier will perform on a large unobserved set of $k_2$ classes, based on its performance on a smaller set of $k_1$ classes. 

Recent works \citep{Zheng2016mutual, yuval} set a probabilistic model for rigorously studying this problem, assuming that the $k_1$ available classes are sampled from the same distribution as the larger set of $k_2$ classes.
Following the framework they propose, we assume that the sets of $k_1$ and $k_2$ classes on which the classifier is trained and evaluated are sampled independently from an infinite continuous set $\mathcal{Y}$ according to $Y_i \sim P_Y(y)$, and for each class, $r$ data points are sampled independently from $\mathcal{X}$ according to the conditional distribution $P_{X \mid Y } (x \mid y)$. In their work, the authors presented two methods for predicting the expected accuracy, one of them originally due to \citet{kay}. We cover these methods in Section \ref{sec:Previous-Work}.

As a first contribution of this work (Section \ref{sec:RAROC}), we provide a theoretical analysis that connects the accuracy of marginal classifiers to a variant of the receiver operating characteristic (ROC) curve,  which is achieved by reversing the roles of classes and data points in the common ROC. 
We show that the \emph{reversed  ROC} ($\revroc$) measures how well a classifier's learned representation separates the correct from the incorrect classes of a given data point. We then prove that the accuracy of marginal classifiers is a function of the rROC, allowing the use of well researched ROC estimation methods \citep{ROC, bhattacharya2015shape} to predict the expected accuracy. 
Furthermore, the reversed area under the curve ($\revauc$) equals the expected accuracy of a binary classifier, where the expectation is taken over all randomly selected pairs of classes.

We use our results regarding the rROC to provide our second contribution (Section \ref{sec:NN}): \emph{CleaneX} (Classification Expected Accuracy Neural EXtrapolation), a new neural-network-based method for predicting the expected accuracy of a given classifier on an arbitrary large set of classes.
CleaneX differs from previous methods by using both the raw classification scores and the observed classification accuracies for different class-set sizes to calibrate its predictions. 
In Section \ref{sec:sim_and_exp} we verify the performance of CleaneX on simulations and real data-sets. We find it achieves better overall predictions of the expected accuracy, and very few 'large' errors, compared to its competitors. We discuss the implications, and how the method can be used by practitioners, in Section \ref{sec:Discussion}. 

\subsection{Preliminaries and notation} \label{subsec:Notation}
In this work $x$ are data points, $y$ are classes, and when referred to as random variables they are denoted by $X,Y$ respectively.
We denote by $y(x)$ the correct class of $x$, and use $\correcty$ when $x$ is implicitly understood. Similarly, we denote by $y'$ an incorrect class of $x$. 

We assume that for each $x$ and $y$ the classifier $h$ assigns a score $\score_y(x)$, such that the predicted class of $x$ is $\arg \max_y \score_{y}(x)$.
On a given dataset of $k$ classes, $\{y_1, \dots, y_k\}$, the accuracy of the trained classifier $h$ is the probability that it assigns the highest score to the correct class
\begin{equation}\label{acc_eq}
    \acc(y_{1},\dots,y_{k})=
    P_{X} (\score_{\correcty}(x) \geq max_{i=1}^{k} \score_{y_{i}}(x) ) 
\end{equation}
where $P_X$ is the distribution of the data points $x$ in the sample of classes $y_1, \dots, y_k$. Since $r$ data points are sampled from each class, $P_{X}$ assumes a uniform distribution over the classes within the given sample.  

An important quantity for a data point $x$ is the probability of the correct class $\correcty$ to outscore a randomly chosen incorrect class $Y' \sim P_{Y'} = P_{Y \mid Y \neq \correcty}$:
$C_x = P_{Y'}(\score_{\correcty}(x) \geq \score_{y'}(x))$.
This is the cumulative distribution function (CDF) of the incorrect scores, evaluated at the value of the correct score. 

We denote the expected accuracy over all possible subsets of $k$ classes from $\mathcal{Y}$ by $\aga$ and its estimator by $\agahat$. We refer to the curve of $\aga$ at different values of $k \geq 2$ as the \emph{accuracy curve}.
Given a sample of $K$ classes, the average accuracy over all subsets of $k$ classes from the sample, is denoted by $\bar{\mathcal{A}}^{K}_k$.

\section{Related work} \label{sec:Previous-Work}
Learning theory provides bounds for multiclass classification that depend on the number of classes \citep{shalev}, and the extension to large mutliclass problems is a topic of much interest \citep{kuznetsov2014multi,lei2015multi, NIPS2018_7431}. However, these bounds cannot be used to estimate the expected accuracy. Generalization to out-of-label set accuracy includes the work of \citet{jain_2010}.
The generalization of classifiers from datasets with few classes to larger class sets include those of \citet{oquab_2014} and \citet{griffin2007caltech}, 
and are closely related to transfer learning \citep{transfer} and extreme classification \citep{liu2017deep}. 
More specific works include that of \citet{abramovic}, which provides lower and upper bounds for the distance between classes that is required in order to achieve a given accuracy.

\citet{kay}, as adapted by \citet{yuval}, propose to  estimate the accuracy of a marginal classifier on a given set of $k$ classes by averaging over $x$ the probability that its correct class outscores a single random incorrect class, raised to the power of $k-1$ (the number of incorrect classes in the sample), that is
\begin{equation} \label{eq:kay}
    \aga =  \expect_{X}[P_{Y'}(  \score_{\correcty}(x) \geq \score_{y'}(x) 
    )^{k-1}]
    = \expect_x[\wcdf^{k-1}] .
\end{equation}
Therefore, the expected accuracy can be predicted by estimating the values of $\wcdf$ on the available data.
To do so, the authors propose using 
kernel density estimation (KDE) choosing the bandwidth with pseudo-likelihood cross-validation  \citep{bandwidth}. 

\citet{yuval} define a discriminability function
\begin{equation} \label{eq:D}
    D(u) = P_X \left (P_{Y'} \left(S_{\correcty}(x) > S_{y'}(x) \right)
    \leq u\right),
\end{equation}
and show that for marginal classifiers, the expected accuracy at $k$ classes is given by 
\begin{equation} \label{AGA_D}
    \aga =1-(k-1) \int_{0}^{1} D(u)u^{k-2}du.
\end{equation}
The authors assume a non-parametric regression model with pre-chosen basis functions $b_j$, so that $D(u) = \sum_{j}\beta_jb_j$. To obtain $\hat{\beta}$ the authors minimize the mean squared error (MSE) between the resulting estimation $\agahat$ and the observed accuracies $\ata$.

\section{Reversed ROC} \label{sec:RAROC}
In this section we show that the expected accuracy, $\aga$, can be better understood by studying an ROC-like curve. To do so, we first recall the definition of the common ROC: for two classes in a setting in which one class is considered as the positive class and the other as the negative one, the  ROC is defined as the graph of the true-positive rate (TPR) against the false-positive rate (FPR) \citep{roc_mecca}.
The common ROC curve represents the separability that a classifier $h$ achieves between data points of the positive class and those of the negative one. At a working point where the $\text{FPR}$ of the classifier is $u$, we have
$\text{ROC}\left(u\right)=\text{\text{TPR}}\left(\text{FPR}^{-1}\left(u\right)\right)$. 

In a multiclass setting, we can define $\text{ROC}_{y}$ for each class $y$ by considering $y$ as the positive class, and the union of all other classes as the negative one. An adaptation of the ROC for this setting can be defined as the expectation over the classes of $\text{ROC}_{y}$, that is
$\overline{\text{ROC}}(u)=\int_{\mathcal{Y}} \text{ROC}_{y}\left(u\right) dP(y)$. 
In terms of classification scores, we have 
$\text{TPR}_{y}(t)=P_{X}(\score_{y}(x)>t \mid y(x) = y)$, 
$\text{FPR}_y (t) = P_{X}(\score_{y}(x) > t  \mid y(x) \neq y)$ 
and thus
$\text{FPR}_{y}^{-1} (u) =\sup_{t} \left\{ P_{X}(\score_{y}(x) > t  \mid y(x) \neq y) \geq u\right\} $.

Here, we single out each time one of the classes $y$ and compare the score of the data points that belong to this class with the score of those that do not.
However, when the number of classes is large, we could instead single out a data point $x$ and compare the score that it gets for the correct class with the scores for the incorrect ones. This reverse view is formalized in the following definition, where we exchange the roles of data points $x$ and classes $y$, to obtain the reversed ROC: 

\begin{definition}\label{def:RAROC}
Given a data point $x$, its corresponding \emph{reversed true-positive} rate is 
\begin{equation}
    \revtpr_{x}\left(t\right) =
    \begin{cases}
        1 & \score_{\correcty} (x) > t \\
        0 & \score_{\correcty} (x) \leq t
    \end{cases}
\end{equation}
The \emph{reversed false-positive} rate is 
\begin{equation}
    \revfpr_{x}\left(t\right)=P_{Y'} \left(\score_{y'} (x) > t \right)
\end{equation}
and accordingly 
\begin{equation}
    \revfpr_{x}^{-1}(u)=\sup_{t} 
    \left\{ P_{Y'} \left(\score_{y'} (x) > t \right)
    \geq u\right\}.
\end{equation}
Consequently, the \emph{reversed ROC} is 
\begin{equation}
    \revroc_{x}(u) =\revtpr_{x}\left(\revfpr_{y}^{-1}(u)\right) =
    \begin{cases}
        1 & \score_{\correcty}(x) >  \sup_{t}\left \{ P_{Y'} \left(\score_{y'} (x) > t \right) \geq u\right\} \\
        0 & \text{otherwise}
    \end{cases}
\end{equation}
and the \emph{average reversed ROC} is
\begin{equation}
    \revaroc\left(u\right)= \int_{\mathcal{X}}  \revroc_{x}\left(u\right) d P(x).
\label{eq:rROCbar}
\end{equation}
\end{definition}

\begin{remark}
Note that $dP(x)$ in Equation \ref{eq:rROCbar} assumes a uniform distribution with respect to a given sample of classes $\{{y_1},\dots,{y_k}\}$ and their corresponding data points.
\end{remark}

Since $ P_{Y'} \left(\score_{y'} (x) > t \right)$ is a decreasing function of $t$, it can be seen that $\revroc_x(u) = 1$ iff $u > P_{Y'} \left(\score_{y'} (x) > \score_{\correcty} \right) = 1-C_x$ (cf.\ Proposition \ref{prop} in Appendix \ref{Proofs}). However, even though $\revroc_x$ is a step function, the $\revaroc$ resembles a common ROC curve, as illustrated in Figure \ref{fig:ROC_illustration}.

\begin{figure}[h]
\begin{center}
\includegraphics[width = \textwidth]{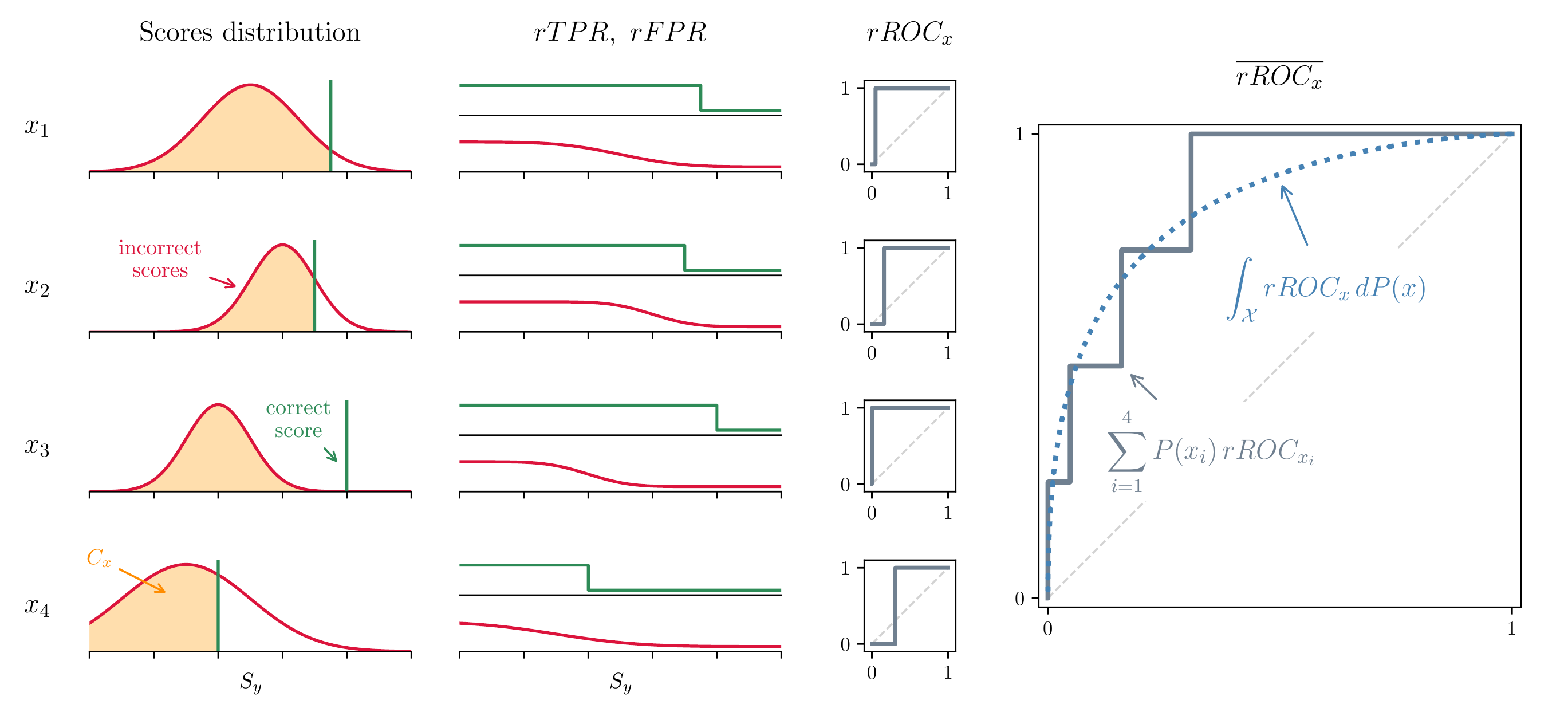}
\end{center}
\caption{The reversed ROC.  The leftmost column shows an example of the score distributions of four data points: the  distribution of scores of incorrect classes (in red), and the score of the correct class (in green). The yellow shaded area is the CDF of the incorrect scores distribution evaluated at the correct score, that is $\wcdf$.
The second column shows the corresponding $\revtpr$ (green, top) and $\revfpr$ (red, bottom). The third column depicts the resulting $\revroc_x$ curves. The rightmost plot presents the average $\revroc$ over the four data points (solid grey); as the number of averaged data points grows, the $\revaroc$ curve becomes smoother (dotted-blue).}
\label{fig:ROC_illustration}
\end{figure}

\subsection{The reversed ROC and classification accuracy}
In what follows, we show that the classification accuracy can be expressed using the average reversed ROC curve. 
We assume a marginal classifier which assigns scores without ties, that is for all $x$ and all $y_i \neq y_j$ we have $\score_{y_i}(x) \neq \score_{y_j}(x)$ almost surely. In such cases the following theorem holds:

\begin{theorem}\label{thm:D_is_RAROC}
The expected balanced classification accuracy at $k$ classes is
\begin{equation}
\aga = 1-(k-1) \int_{0}^{1} \left(1-\revaroc\left(1-u\right) \right) u^{k-2}du .
\end{equation}
\end{theorem}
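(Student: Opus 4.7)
The plan is to reduce Theorem~\ref{thm:D_is_RAROC} to the already-established identity~\eqref{AGA_D} of \citet{yuval}, namely $\aga = 1-(k-1)\int_0^1 D(u)\,u^{k-2}\,du$, by showing the pointwise relation
\begin{equation*}
    D(u) \;=\; 1-\revaroc(1-u), \qquad u\in[0,1].
\end{equation*}
If this identity holds, substituting it into \eqref{AGA_D} gives the theorem immediately, so essentially the entire content of the proof is to translate between the discriminability function $D$ and the average reversed ROC.

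First, I would invoke Proposition~\ref{prop} (alluded to in the remark just before the theorem), which states that $\revroc_x(u)=1$ iff $u>1-\wcdf$. This is the key reformulation: the step function $\revroc_x$ is exactly the indicator $\indicator\{u>1-\wcdf\}$, because $P_{Y'}(\score_{y'}(x)>t)$ is monotone decreasing in $t$ and equals $1-\wcdf$ at $t=\score_{\correcty}(x)$. Averaging over $x$ with respect to the uniform distribution on the given class sample (as noted in the remark following the definition), I get
\begin{equation*}
    \revaroc(u)
    \;=\; \int_{\mathcal{X}} \indicator\{u > 1-\wcdf\}\, dP(x)
    \;=\; P_X(\wcdf > 1-u).
\end{equation*}
Replacing $u$ by $1-u$ yields $\revaroc(1-u) = P_X(\wcdf > u)$, and since $\wcdf$ has (almost surely) no atoms under the no-ties assumption, $1-\revaroc(1-u) = P_X(\wcdf \le u)$. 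Comparing with the definition of $D$ in \eqref{eq:D}, this is precisely $D(u)$.

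Plugging $D(u)=1-\revaroc(1-u)$ into \eqref{AGA_D} gives the claimed formula. The only subtle point, and what I would flag as the main obstacle, is the no-ties assumption: it is needed to pass from $P_X(\wcdf > u)$ to $1-P_X(\wcdf\le u)$ without a boundary correction, and also to guarantee that the event $\{\score_{\correcty}(x)=t\}$ at the supremum in $\revfpr_x^{-1}$ does not contribute a jump that would misalign $\revroc_x$ with the indicator above. This is why the marginal-classifier no-ties hypothesis stated just before the theorem is essential; once it is in force, the chain of equalities above is routine and the theorem follows.
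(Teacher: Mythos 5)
Your proof is correct and follows essentially the same route as the paper's: both reduce the theorem to the identity $1-\revaroc(1-u)=D(u)$ via Proposition~\ref{prop} (i.e., $\revroc_x(u)=\indicator\{u>1-\wcdf\}$), average over $x$, and then invoke Equation~\ref{AGA_D} of \citet{yuval}. One small imprecision: the step from $P_X(\wcdf>u)$ to $1-P_X(\wcdf\le u)$ is just complementation and needs no assumption; the no-ties hypothesis is actually used to reconcile the non-strict inequality in the definition of $\wcdf$ with the strict inequality inside $D$ in Equation~\ref{eq:D}, which is exactly where the paper invokes it.
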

To prove this theorem we show that 
\begin{equation}
    1-\revaroc(1-u) = P_X \left (P_{Y'} \left( S_{\correcty}(x) > S_{y'}(x)\right)\leq u\right) = D(u)
\end{equation}
and the rest follows immediately from the results of \citet{yuval} (see Equation \ref{AGA_D}).
We provide the detailed proof in Appendix \ref{Proofs}.

Now, using the properties of the $\revroc$ we get
\begin{align}
\aga& = 
    1-(k-1) \int_{0}^{1} \left(1-\revaroc\left(1-u\right)\right)u^{k-2}du  \nonumber \\
    & =  \int_{0}^{1}  \left( \int_\mathcal{X} \revroc_{x}(1-u) dP(x)\right) (k-1) u^{k-2}  du   \nonumber \\
    & =  \int_\mathcal{X}  \int_{0}^{C_x}  (k-1) u^{k-2}du dP(x) = \int_\mathcal{X}  C_x^{k-1} dP(x)= \expect_x [C_x^{k-1}]. \label{c_eq}
\end{align}
Therefore, in order to predict the expected accuracy it suffices to estimate the values of $\wcdf$.
A consequence of the result above is that the expressions that \citet{kay} and \citet{yuval} estimate (Equations \ref{eq:kay} and \ref{AGA_D} respectively) are in fact the same. Nevertheless, their estimation methods differ significantly.

Finally, we note that the theoretical connection between the reversed ROC and the evaluation of classification models extends beyond this particular work. For example, by plugging $k=2$ into Theorem \ref{thm:D_is_RAROC} it immediately follows that the area under the reversed ROC curve (\revauc) is the expected balanced accuracy of two classes:
$\revauc \coloneqq \int_{0}^{1} \revaroc\left(u\right)du = \agatwo$.

\section{Expected accuracy prediction} \label{sec:NN}
In this section we present a new algorithm, \emph{CleaneX},  for the prediction of the expected accuracy of a classifier. The algorithm is based on a neural network that estimates the values of $\wcdf$ using the classifier's scores on data from the available $k_1$ classes. These estimations are then used, based on the results of the previous section, to predict the classification accuracy at $k_{2}>k_{1}$ classes.

The general task of estimating densities using neural networks has been widely addressed \citep{neural_cdf_nips, neural_cdf_!, density_Samy_Bengio,deep_density}. However, in our case, we need to estimate the cumulative distribution only at the value of the correct score $\score_{\correcty}\left(x\right)$. This is an easier task to perform and it allows us to design an estimation technique that learns to estimate the CDF in a supervised manner. We use a neural network $f(\cdot; \theta): \mathbb{R}^{k_{1}}\rightarrow\mathbb{R}$ whose input is a vector of the correct score, and $k_1-1$ incorrect scores, and its output is $\hat{\wcdf}$.
Once $\hat{\wcdf}$ values are estimated for each $x$, an expected accuracy for each $2 \leq k \leq k_1$ can be estimated according to Equation (\ref{c_eq}): $\agahat = \frac{1}{N} \sum_{x} \hat{\wcdf}^{k-1}$  , where $N=r k_1$.
In each iteration, the network's weights are optimized by minimizing the average over $2 \leq k \leq k_1$ square distances from the observed accuracies $\ata$:
$\sum_{k=2}^{k_1} \left( \frac{1}{N} \sum_x \hat{\wcdf}^{k-1} - \ata \right)^2$. After the weights have converged, the estimated expected accuracy at $k_2 > k_1$ classes can be obtained by $\agahatktwo = \frac{1}{N} \sum_{x} \hat{\wcdf}^{k_2-1}$. Note that regardless of the target $k_2$, the networks input size is $k_1$.  The proposed method is described in detail in Algorithm 1.

\begin{algorithm} \label{algo}
\SetAlgoLined
\SetKwInOut{Input}{Input}
\SetKwInOut{Output}{Output}
\Input{The classifier's score function $\score$,
a training set of $N$ examples $x$ from the set of $k_1$ available classes, target number of classes $k_2$; a feedforward neural network $f(\cdot; \theta)$, initial network weights $\theta_0$, number of training iterations $J$, learning rate $\eta$}
\Output{Estimated accuracy at $k_2$ classes}

\For{$k = 2, \dots, k_1$}{Compute $\ata$}

\For{each $x$ in training set} {
  Set $S'(x) \leftarrow \left( \score_{y'_{1}}(x),\dots,\score_{y'_{k_{1} - 1}}(x)\right)$\\
  Sort $S'(x)$\\
  Set $S_x \leftarrow \left(S_{\correcty}(x), S'(x)\right)$
  }
\For{$j = 1, \dots, J$}{
  \For{each $x$} {
  Set $\hat{\wcdf} \leftarrow f(S_x; \theta_j)$
  }
  
  Update network parameters performing a gradient descent step
  \begin{equation}\textstyle
      \theta_{j} \leftarrow \theta_{j-1} - \eta \nabla_{\theta}
      \left( \sum_{k=2}^{k_1} 
      \left( \frac{1}{N} \sum_x \hat{\wcdf}^{k-1} - \ata
      \right)^2
      \right)
      \nonumber
  \end{equation}
  }
 Return  $\agahatktwo = \frac{1}{N} \sum_{x} \hat{\wcdf}^{k_2-1} $ \;
 \caption{CleaneX}
\end{algorithm}

Unlike KDE and non-parametric regression, our method does not require a choice of basis functions or kernels. It does require choosing the network architecture, but as will be seen in the next sections, in all of our experiments we use the same architecture and hyperparameters, indicating that our algorithm requires very little tuning when applied to new classification problems. 

Although the neural network allows more flexibility compared to the non-parametric regression, the key difference between the method we propose and previous ones is that CleaneX combines two sources of information: the classification scores (which are used as the network's inputs) and the average accuracies of the available classes (which are used in the minimized objective). Unlike the KDE and regression based methods which use only one type of information, we estimate the CDF of the incorrect scores evaluated at the correct score, $\wcdf$, but calibrate the estimations to produce accuracy curves that fit the observed ones. We do so by exploiting the mechanism of neural networks to iteratively optimize an objective function over the whole dataset.

\section{Simulations and experiments} \label{sec:sim_and_exp}
Here we compare CleaneX to the KDE method \citep{kay} and to the non-parametric regression \citep{yuval} on simulated datasets that allow to examine under which conditions each method performs better, and on real-world datasets. Our goal is to predict the expected accuracy $\aga \in [0,1]$ for different values of $k$. Therefore, for each method we compute the estimated expected accuracies $\agahat$ for $2 \leq k \leq k_2$ and measure the success of the prediction using the root of the mean integrated square error (RMSE): $\big(\frac{1}{k_2 - 1} \sum_{k=2}^{k_2} (\agahat - \ata)^2\big)^{1/2}$.

For our method, we use in all the experiments an identical feed-forward neural network with two hidden layers of sizes 512 and 128, rectified linear activation between the layers, and sigmoid activation applied on the output. We train the network according to Algorithm \ref{algo} for $J=10,000$ iterations with learning rate of $\eta = 10^{-4}$ using Adam optimizer \citep{adam}. For the regression based method we choose a radial basis and for the KDE based method a normal kernel, as recommended by the authors. The technical implementation details are provided in Appendix \ref{implement}.

A naive calculation of $\bar{\mathcal{A}}^{K}_k$ requires averaging the obtained accuracy over all possible subsets of $k$ classes from $K$ classes. However, \citet{yuval} showed that for marginal classifiers 
$\bar{\mathcal{A}}^{K}_k = 
\frac{1}{\binom{K}{k}}\frac{1}{r k}\sum_x \binom{R_x}{k-1}$,
where the sum is taken over all data points $x$ from the $K$ classes, and $R_x$ is the number of incorrect classes that the correct class of $x$ outperforms: $R_x = \sum_{y'} \indicator \{ S_{\correcty}(x) > S_{y'}(x) \}$. We use this result to compute $\bar{\mathcal{A}}^{k_2}_k$ values.

\subsection{Simulation Studies}\label{sec:Simulation}
Here we provide comparison of the methods under different parametric settings.
We simulate both classes and data points as $d$-dimensional vectors, with $d=5$ (and $d=3,10$ shown in Appendix \ref{Sim_additional}).
Settings vary in the distribution of classes $Y$ and data-points $X|Y$, and in the spread of data-points around the class centroids. 
We sample the classes $y_1, \dots, y_{k_2}$ from a multivariate normal distribution $\mathcal{N}(0,I)$ or a multivariate uniform distribution $ \mathcal{U}(-\sqrt{3}, \sqrt{3})$\footnote{This results in covariances between the classes that equal to those in the multivariate normal case.}. We sample $r=10$ data points for each class, either from a multivariate normal distribution $\mathcal{N}(y,\sigma^2 I)$ or from a multivariate uniform distribution $ \mathcal{U}(y-\sqrt{3\sigma^2}, y+\sqrt{3\sigma^2})$. 
The difficulty level is determined by $\sigma^2 =0.1, 0.2$. The classification scores $\score_y (x)$ are set to be the euclidean distances between $x$ and $y$ (in this case the correct scores are expected to be lower than the incorrect ones, entailing some straightforward modifications to our analysis).
For each classification problem, we subsample 50 times $k_1$ classes, for $k_1 = 100, 500$, and predict the accuracy at $2 \leq k \leq k_2=2000$ classes.

\begin{figure}[h] 
\begin{center}
\includegraphics[width = \textwidth]{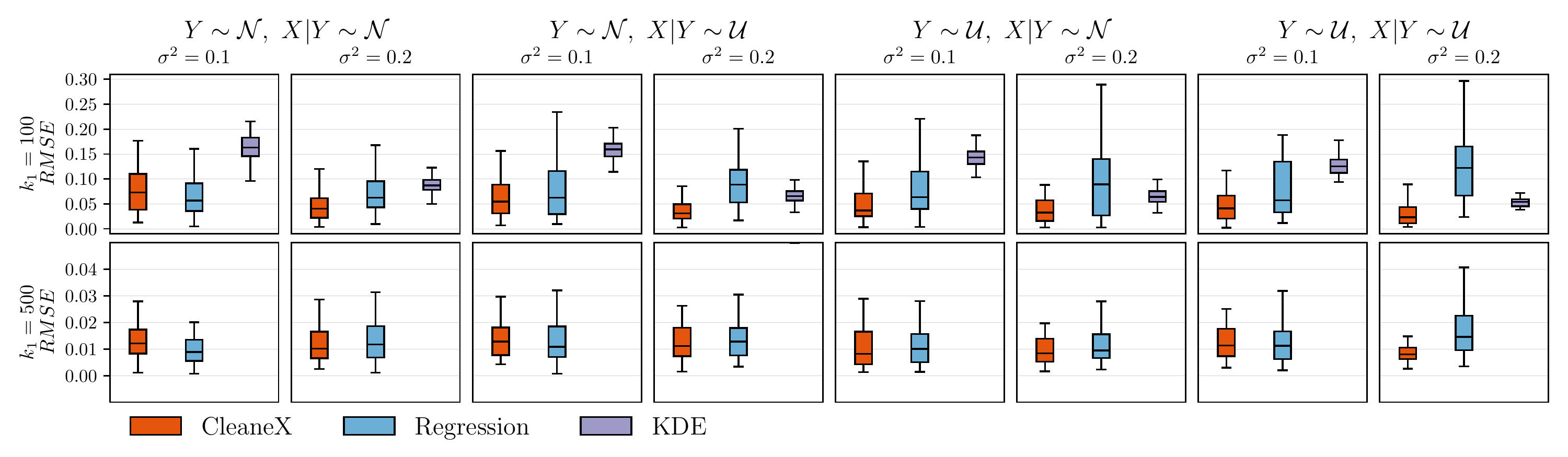}
\end{center}
\caption{Simulation results. For each scenario we show a boxplot representing the RMSE values obtained over 50 repetitions using CleaneX (left box, orange), regression based method (middle box, blue) and KDE (right box, purple). The boxes extend from the lower to the upper quartile values, with a line at the median; whiskers show values at a distance of at most 1.5 IQR (interquartile range) from the lower and the upper quartiles; outliers are not shown.
For $k_1=500$ the KDE based method achieves RMSE values higher than 0.05 and are therefore not shown.}
\label{fig:simulation_barplots}
\end{figure}

The results of the simulations, summarized in Figure \ref{fig:simulation_barplots}, show the distribution of RMSE values for each method and setting over 50 repetitions. The corresponding accuracy curves are shown in Figure \ref{dim-add-results_d5} in Appendix \ref{Sim_additional}. 
It can be seen that extrapolation to 20 times the original number of classes can be achieved with reasonable errors (the median RMSE is less than $5\%$ in almost all scenarios). Our method performs better or similar to the competing methods, often with substantial gain. For example, for $d=5$ the results show that our method outperforms the KDE based method in all cases; it outperforms the regression based method in 7 out of 8 settings for $k_1=100$, and for $k_1=500$ the regression based method and CleaneX achieve very similar results. As can be seen in Figures \ref{fig:simulation_barplots} and \ref{dim-add-results_d5}, the accuracy curves produced by the KDE method are highly biased and therefore it consistently achieves the worst performance. The results of the regression method, on the other hand, are more variable than our method, especially at $k_1 = 100$. 
Additional results for $d = 3, 10$ (see Appendix \ref{Sim_additional}) are consistent with these results, though all methods predict better for $d = 10$.

\subsection{Experiments} \label{sec:Experiments}
Here we present the results of three experiments performed 
on datasets from the fields of computer vision and computational neuroscience. We repeat each experiment 50 times. In each repetition we sub-sample $k_1$ classes and predict the accuracy at $2 \leq k \leq k_2$ classes.

\textbf{Experiment 1 - Object Detection (CIFAR-100)} We use the CIFAR dataset \citep{cifar} that consists of $32 \times 32$ color images from 100 classes, each class containing 600 images. Each image is embedded into 512-dimensional space by a VGG-16 network \citep{simonyan2014very} which was pre-trained on the ImageNet dataset \citep{imagenet}. On the training set, the centroid of each class is calculated and the classification scores for each image in the test set are set to be the distances of the image embedding from the centroids. The classification accuracy is extrapolated from $k_1 = 10$ to $k_2 = 100$ classes.

\textbf{Experiment 2 - Face Recognition (LFW)} We use the ``Labeled Faces in the Wild" dataset \citep{faces} and follow the procedure described in \citet{yuval}: we restrict the dataset to the 1672 individuals for whom it contains at least 2 face photos and include in our data exactly 2 randomly chosen photos for each person. We use one of them as a label $y$, and the other as a data point $x$, consistent with a scenario of single-shot learning. Each photo is embedded into  128-dimensional space using OpenFace embedding \citep{faces_embed}. The classification scores are set to be the euclidean distance between the embedding of each photo and the photos that are used as labels. Classification accuracy is extrapolated from $k_1 = 200$ to $k_2 = 1672$ classes.

\textbf{Experiment 3 - Brain Decoding (fMRI)} We analyze a ``mind-reading’’ task described by \citet{kay} in which a vector of $v=1250$ neural responses is recorded while a subject watches $n = 1750$ natural images inside a functional MRI. The goal is to identify the correct image from the neural response vector. Similarly to the decoder in \citep{naselaris2011encoding}, we use $n_t = 750$ images and their response vectors to fit an embedding $g(\cdot)$ of images into the brain response space, and to estimate the residual covariance $\Sigma$. The remaining $n-n_t = k_2 = 1000$ examples are used as an evaluation set for $g(\cdot)$.
For image $y$ and brain vector $x$, the score is the negative Mahalanobis distance $- \|g(y)-x\|^2_\Sigma$.  For each response vector, the image with the highest score is selected. $k_1 = 200$ examples from the evaluation set are extrapolated to $k_2 = 1000$.

The experimental results, summarized in Figure \ref{exp-results}.D, show that generally CleaneX outperforms the regression and KDE methods: Figure \ref{exp-results}.E shows that the median ratio between RMSE values of the competing methods and CleaneX is higher than 1 in all cases except for KDE on the LFW dataset. 
As can be seen in Figure \ref{exp-results}.A and B, the estimated accuracy curves produced by CleaneX have lower variance than those of the regression method (which do not always decrease with $k$), and on average our method outperforms the regression, achieving lower prediction errors by $18\%$ (CIFAR), $32\%$  (LFW) and $22\%$ (fMRI). On the other hand, the KDE method achieves lower variance but high bias in two of three experiments (Figure \ref{exp-results}.C), where it is outperformed by CleaneX by $7\%$ (CIFAR) and $73\%$ (fMRI). However, in contrast to the simulation results and its performance on the CIFAR and fMRI datasets, the KDE method achieves exceptionally low bias on the LFW dataset, outperforming our method by $38\%$. Overall, CleaneX produces more reliable results across the experiments.

\begin{figure}[h]
\begin{center}
\includegraphics[width = \textwidth]{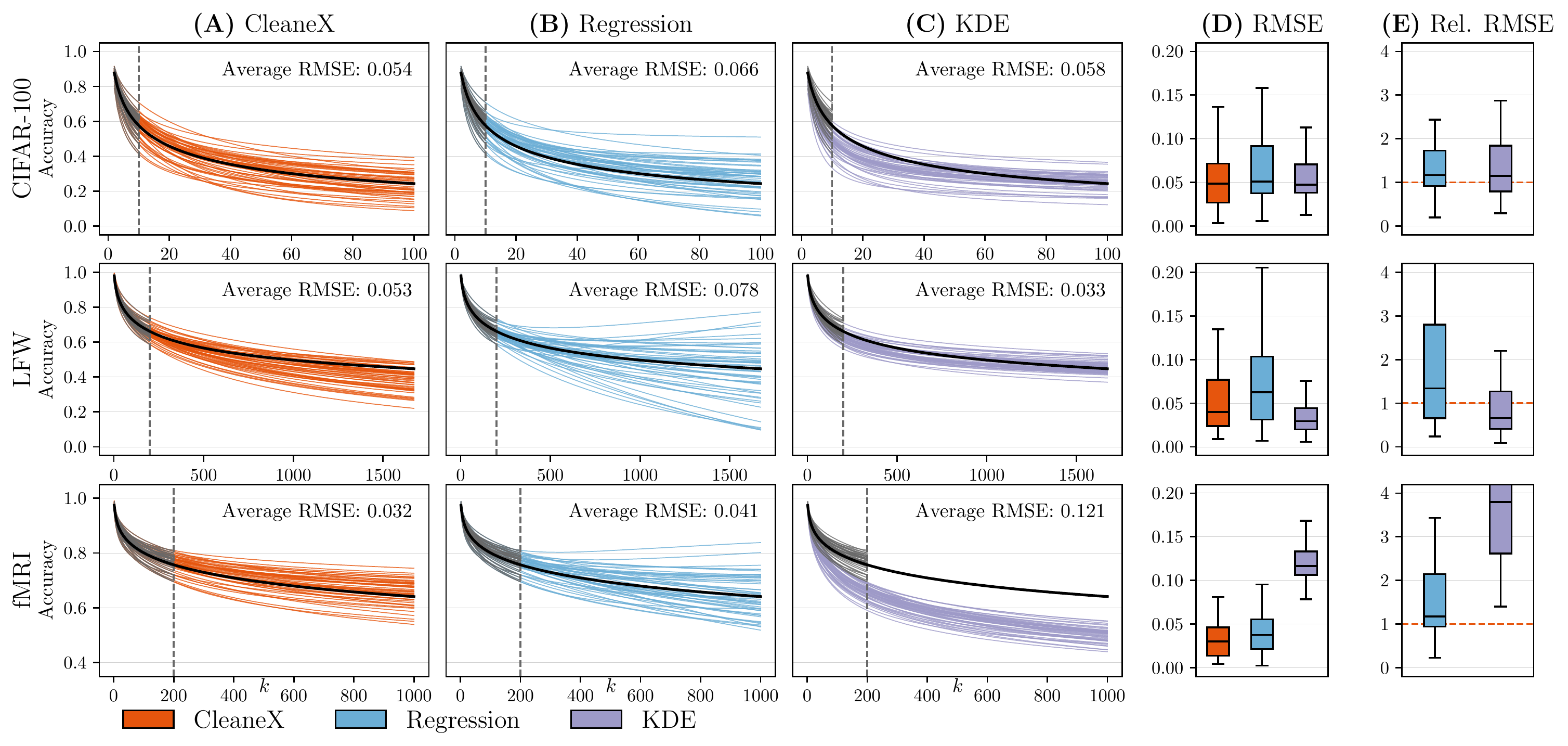}
\end{center}
\caption{ Experimental results. A, B, C: accuracy curves of the three datasets as predicted by CleaneX, regression and KDE, respectively; dotted vertical lines denote $k_1$,
grey curves correspond to $\ata$ at each repetition, black curves correspond to $\aga$  for $2 \leq k \leq k_2$; average RMSE taken over all 50 repetitions.
D: distribution of RMSE values over the 50 repetitions.
E: distribution of the ratio between RMSE values of regression/KDE and CleaneX -- values above 1 (orange dotted line) indicate that CleaneX outperforms competing methods (charts capped at 4). In D and E, boxes show lower quartile, higher quartile and median; whiskers show values at 1.5 IQR from box; outliers not shown.} 
\label{exp-results}
\end{figure}

\section{Discussion} \label{sec:Discussion}
In this work we presented the reversed ROC and showed its connection to the accuracy of marginal classifiers. We used this result to develop a new method for accuracy prediction. 

Analysis of the two previous methods for accuracy extrapolation reveals that each of them uses only part of the relevant information for the task.
The KDE method estimates $C_x$ based only on the scores, ignoring the observed accuracies of the classifier. Even when the estimates of $C_x$ are unbiased, the exponentiation in $\expect_x[\wcdf^{k-1}]$ introduces bias. Since the estimation is not calibrated using the observed accuracies, the resulting estimations are often biased. As can be seen in Figure \ref{fig:simulation_barplots}, the bias is higher for $k_1=500$ than for $k_1=100$, indicating that this is aggregated when $k_1$ is small. In addition, we found the method to be sensitive to monotone transformations of the scores, such as taking squared-root or logarithm. In contrast, the non-parametric regression based method uses pre-chosen basis functions to predict the accuracy curves and therefore has limited versatility to capture complicated functional forms. Since it ignores the values of the classification scores, the resulting predicted curves do not necessarily follow the  form of $\expect_x [C_x^{k-1}]$, introducing high variance estimations. As can be seen in Figure \ref{fig:simulation_barplots}, the variance is higher for $k_1=100$ compared to $k_1=500$, indicating higher variance for small values of $k_1$. This can be expected since the number of basis functions that are used to approximate the discriminability function depends on $k_1$.

In CleaneX, we exploit the mechanism of neural networks in a manner that allows us to combine both sources of information with less restriction on the shape of the curve. Since the extrapolation follows the known exponential form using the estimated $\wcdf$ values it is characterized with low variance, and since the result is calibrated by optimizing an objective that depends on the observed accuracies, our method has low bias, and therefore consistently outperforms the previous methods. 

Comparing the results for $k_1=500$ between different dimensions ($d=5$ in Figure \ref{fig:simulation_barplots} and $d=3, 10$ in Figure \ref{dim-add-results_d3_10}) it can be seen that the bias of the KDE based method and the variance of the regression based methods are lower in higher data dimensions. \citet{Zheng2016mutual} show that if both $X$ and $Y$ are high dimensional vectors and their joint distribution factorizes, then the classification scores of Bayesian classifiers (which are marginal) are approximately Gausssian. The KDE and the regression based methods use Gaussian kernel and Gaussian-based basis functions respectively, and are perhaps more efficient in estimating approximately Gaussian score distributions. Apparently, scores for real data-sets behave closer to low-dimensional data, where the flexibility of CleaneX (partly due to being a non-parametric method) is an advantage.

A considerable source of noise in the estimation is the selection of the $k_1$ classes. The $\ata$ curves diverge from $\aga$ as the number of classes increases, and therefore it is hard to recover if the initial $k_1$ classes deviated from the true accuracy. The effect of choosing different initial subsets can be seen by comparing the grey curves and the orange curves that continue them, for example in Figure \ref{exp-results}. We leave the design of more informative sampling schemes for future work. 

To conclude, we found the method we present to be considerably more stable than previous methods. Therefore, it is now possible for researchers to reliably estimate how well the classification system they are developing will perform using representations learned only on a subset of the target class set.

Although our work focuses on marginal classifiers, its importance extends beyond this class of algorithms. First, preliminary results show that our method yields good estimates even when applied to (shallow) non-marginal classifiers such as multi-logistic regression. Moreover, if the representation can adapt as classes are introduced, we expect accuracy to exceed that of a fixed representation. We can therefore use our algorithm to measure the degree of adaptation of the representation. Generalization of our method to non marginal classifiers is a prominent direction for future work.

\subsubsection*{Acknowledgments}
We thank Etam Benger for many fruitful discussions, and Itamar Faran and Charles Zheng for commenting on the manuscript. YS is supported by the Israeli Council For Higher Education Data-Science fellowship and the CIDR center at the Hebrew University of Jerusalem.

\newpage
\bibliography{accuracy_extrapolation}
\bibliographystyle{iclr2021_conference}

\newpage
\appendix
\section{Proof of theorem 1}\label{Proofs}

We begin by proving the following simple proposition:

\begin{proposition} \label{prop}
Given $x$, let 
\begin{equation}
G_{1} 
=\left\{ y \; \middle| \; \score_{y}\left(x\right)>\sup_t
\left\{ P_{Y'}\left(\score_{y'}\left(x\right) > t\right)\geq u\right\} \right\}
\end{equation}
and
\begin{equation}
G_{2} =\left\{ y \; \middle| \; P_{Y'}\left(\score_{Y'}\left(x\right) > \score_{y}\left(x\right)\right)<u\right\},
\end{equation}
then $G_{1}=G_{2}.$
\end{proposition}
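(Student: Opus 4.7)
The plan is to rewrite both sets in terms of the survival function $F(t) := P_{Y'}(S_{y'}(x) > t)$ of the incorrect scores at the fixed data point $x$. Setting $t^{*} := \sup\{t : F(t) \geq u\}$, the definitions immediately give $G_{1} = \{y : S_{y}(x) > t^{*}\}$ and $G_{2} = \{y : F(S_{y}(x)) < u\}$, so it suffices to prove the pointwise equivalence $S_{y}(x) > t^{*} \Longleftrightarrow F(S_{y}(x)) < u$.

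To do this, I would first record two structural properties of $F$. Monotonicity is immediate: $F$ is nonincreasing in $t$. Continuity requires the no-ties hypothesis standing behind the theorem: since the classifier assigns scores without ties, two independent draws from $P_{Y'}$ almost surely receive distinct scores at $x$, which forces the law of $S_{y'}(x)$ to be atomless and hence $F$ to be continuous. Together these imply that the superlevel set $A := \{t : F(t) \geq u\}$ is a closed down-set in $\mathbb{R}$, i.e.\ of the form $(-\infty, t^{*}]$ with $F(t^{*}) = u$, up to the two degenerate cases $A = \emptyset$ (when $F < u$ everywhere, so $t^{*} = -\infty$) and $A = \mathbb{R}$ (when $F \geq u$ everywhere, so $t^{*} = +\infty$).

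Given this, both inclusions are short. For $G_{1} \subseteq G_{2}$: if $S_{y}(x) > t^{*}$ then $S_{y}(x) \notin A$, hence $F(S_{y}(x)) < u$. For $G_{2} \subseteq G_{1}$: if $F(S_{y}(x)) < u$ then $S_{y}(x) \notin A$, and since $A = (-\infty, t^{*}]$ this forces $S_{y}(x) > t^{*}$. The degenerate cases are handled by the same argument with the conventions $\sup \emptyset = -\infty$ and $\sup \mathbb{R} = +\infty$, under which $G_{1}$ and $G_{2}$ coincide trivially (either both are the whole class space or both are empty).

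The only delicate point, and in my view the main obstacle, is justifying continuity of $F$. Without it, $A$ could equal $(-\infty, t^{*})$ rather than $(-\infty, t^{*}]$, and then at $S_{y}(x) = t^{*}$ one could have $F(S_{y}(x)) < u$ yet $S_{y}(x) \not> t^{*}$, breaking $G_{2} \subseteq G_{1}$. Everything else is a direct consequence of the definition of supremum together with the monotonicity of $F$, so the entire content of the proposition is concentrated in correctly invoking the no-ties assumption to rule out atoms of the distribution of $S_{y'}(x)$.
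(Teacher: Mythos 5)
Your proof is correct and follows essentially the same route as the paper's: both arguments reduce the claim to the pointwise equivalence $\score_{y}(x) > t^{*} \Leftrightarrow F(\score_{y}(x)) < u$, where $F(t) = P_{Y'}(\score_{y'}(x) > t)$ and $t^{*} = \sup\{t : F(t) \geq u\}$, and derive it from the monotonicity of $F$ (the paper phrases this through the complements $G_1^c, G_2^c$, which is immaterial). You are in fact more careful than the paper at the one delicate point: the paper's proof cites only the fact that $F$ is decreasing, whereas, as you correctly observe, the inclusion $G_2 \subseteq G_1$ can fail when $\score_{y}(x) = t^{*}$ if the law of $\score_{Y'}(x)$ has an atom there, so the no-ties assumption --- which renders $F$ continuous and the superlevel set $\{t : F(t) \geq u\}$ closed --- is genuinely needed and is only implicit in the paper's argument.
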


\begin{proof}
Let $G_1^c$, $G_2^c$ be the complement sets of $G_1, G_2$, respectively. Then 
\begin{equation}
   G_1^c = \{y \mid \score_y(x) \leq  \sup_t \left\{  P_{Y'}\left(\score_{y'}\left(x\right)> t\right)\geq u\right\} \}
\end{equation}
and 
\begin{equation}
   G_2^c = \{y \mid P_{Y'}\left(\score_{y'}\left(x\right) > \score_{y}\left(x\right)\right) \geq u \}.
\end{equation}

Since $P_{Y'}\left(\score_{y'}\left(x\right)\geq t\right)$ is decreasing in $t$ we have
\begin{align}
y \in G_1^c & \Leftrightarrow \score_y(x) \leq  \sup_t \left\{  P_{Y'}\left(\score_{y'}\left(x\right) >  t\right)\geq u\right\} \nonumber \\
 & \Leftrightarrow P_{Y'}\left(\score_{y'}\left(x\right) > \score_{y}\left(x\right)\right) \geq u \\
 & \Leftrightarrow y \in G_2^c \nonumber 
\end{align}
and therefore $G_1=G_2$.
\end{proof}

We now prove Theorem \ref{thm:D_is_RAROC}, which was presented in Section \ref{sec:RAROC}:

\begin{proof}
We show that $1-\revaroc\left(1-u\right)=D\left(u\right)$,
and the rest follows immediately according to Theorem 3 of \citet{yuval} (see Equation \ref{AGA_D}):
\begin{align} 
1-\revroc_{x}\left(1-u\right)
 & =1-\indicator\left[S_{\correcty}(x)>\sup_{t}\left\{ P_{Y'}\left(\score_{y'}(x)>t\right)\geq 1-u\right\} \right] &  & \text{(Definition \ref{def:RAROC})} \nonumber\\
 & =1-\indicator\left[P_{Y'}\left(S_{y'}(x)>S_{\correcty}(x)\right)<1-u\right] &  & \text{(Proposition \ref{prop})} \nonumber\\
 & =\indicator \left[P_{Y'}\left(S_{y'}(x)\leq S_{\correcty}(x)\right)\leq u\right] &&  \nonumber \\
 & =\indicator \left[P_{Y'}\left(S_{y'}(x) < S_{\correcty}(x)\right) \leq u\right], &  &
\end{align}
where $\indicator[\cdot]$ denotes the indicator function, and the last equality follows from the assumption that $S_{y'}(x)\neq S_{y}(x)$ almost surely.
From here, 
\begin{align} 
 & 1-\revaroc\left(1-u\right)\\
 & =\expect_{X}\left[ \indicator \left(P_{Y'}\left(S_{y'}(x)< S_{\correcty}(x)\right)\leq u\right)\right] \nonumber\\
 & =P_{X}\left(P_{Y'}\left(S_{y'}(x)< S_{\correcty}(x)\right)\leq u\right) \nonumber\\
 & =D(u) \nonumber
\end{align}
as required.
\end{proof}

\section{Simulations - additional results }\label{Sim_additional}

Figures \ref{dim-add-results_d5} and \ref{dim-add-results_d5_500} show the accuracy curves for which we presented summarized results in Figure \ref{fig:simulation_barplots}.

\begin{figure}
\begin{center}
\includegraphics[width=\textwidth]{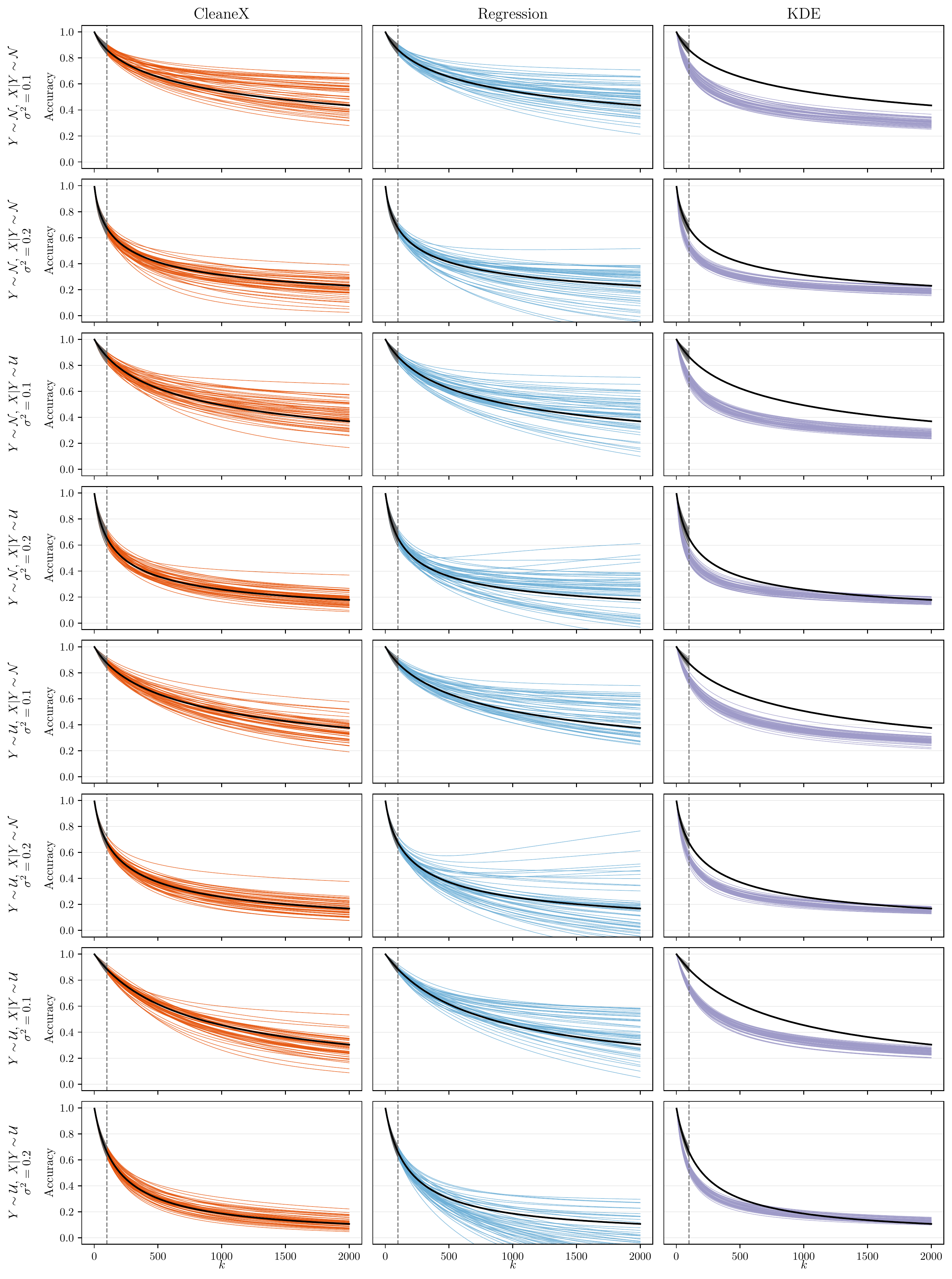}
\end{center}
\caption{
Comparison of predicted accuracy curves produced by CleaneX (left, orange), regression based method (middle,  blue) and KDE (right,  purple), on the eight simulated datasets with $d=5$ and $k_1=100$ (dotted vertical line).
The curves of $\ata$ for each repetition are shown in grey.
The black curves correspond to $\aga$ for $2 \leq k \leq k_2=2000$.
}
\label{dim-add-results_d5}
\end{figure}

\begin{figure}
\begin{center}
\includegraphics[width=\textwidth]{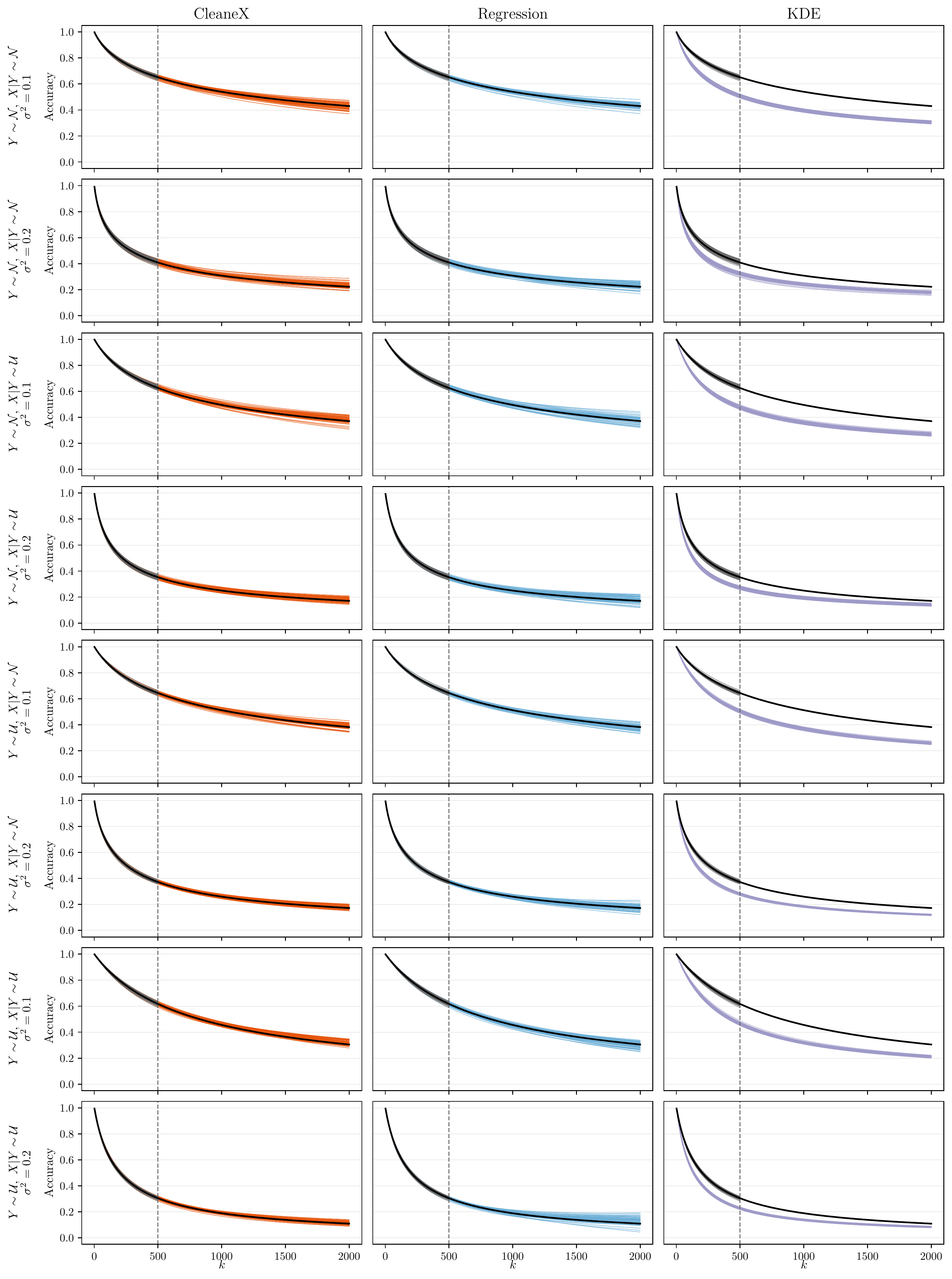}
\end{center}
\caption{
Comparison of predicted accuracy curves produced by CleaneX (left, orange), regression based method (middle,  blue) and KDE (right,  purple), on the eight simulated datasets with $d=5$ and $k_1=500$ (dotted vertical line).
The curves of $\ata$ for each repetition are shown in grey.
The black curves correspond to $\aga$ for $2 \leq k \leq k_2=2000$.
}
\label{dim-add-results_d5_500}
\end{figure}

In Figure \ref{dim-add-results_d3_10} we provide additional simulation results for data in dimensions $d = 3,10$.  We simulate eight $d$-dimensional datasets of $k_2 =2000$ classes and extrapolate the accuracy from $k_1 = 500$ classes. As in Section \ref{sec:Simulation}, the datasets are combinations of two distributions of $Y$ and two of $X \mid Y$, in two different levels of classification difficulty. We sample the classes $y_1, \dots, y_{k_2}$ from a multivariate normal distribution $\mathcal{N}(0,I)$ or a multivariate uniform distribution $ \mathcal{U}(-1, 1)$. We sample $r=10$ data points for each class from a multivariate normal distribution $N(y,\sigma^2 I)$ or from a multivariate uniform distribution $ U(y-\sigma^2, y+\sigma^2)$. For $d=3$ we set $\sigma^2$  to 0.1 for the easier classification task and 0.2 for the difficult one. For $d=10$, the values of $\sigma^2$ are set to 0.6 and 0.9. The classification scores $\score_y (x)$ are the euclidean distances between $x$ and $y$. 

It can be seen that the results for $d=3$ are consistent with those for $d=5$. That is, our method outperforms the two competing methods, even with a bigger gap, in seven of the eight simulated datasets. For $d=10$ the regression based method achieves compatible results with ours. As in the other simulations, our method produces less variance than the regression and less bias than KDE.

\begin{figure}[h]
\begin{center}
\includegraphics[width = \textwidth]{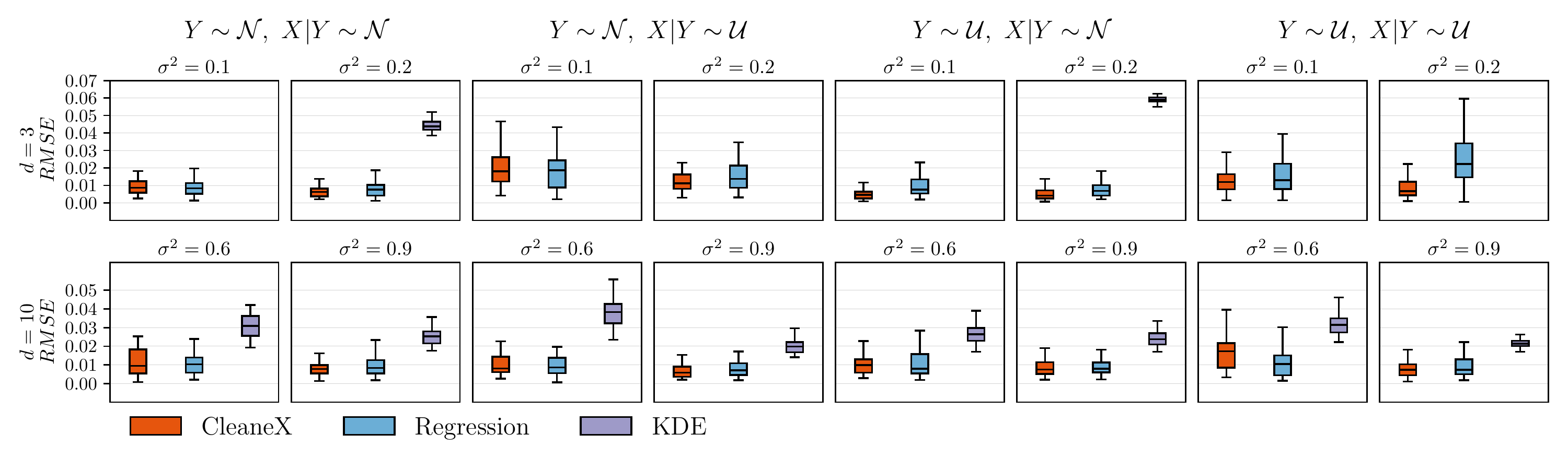}
\end{center}
\caption{Additional simulation results. For each scenario we show a boxplot representing the RMSE values obtained over 50 repetitions using CleaneX (left box, orange), regression based method (middle box, blue) and KDE (right box, purple).
The boxplots for the KDE method are not shown when all obtained RMSE values are higher that 0.07.
The boxes extend from the lower to the upper quartile values, with a line at the median; whiskers show values at a distance of at most 1.5 IQR (interquartile range) from the lower and the upper quartiles; outliers are not shown.}
\label{dim-add-results_d3_10}
\end{figure}



\section{Implementation details}\label{implement}

All the code in this work was implemented in Python 3.6. For the CleaneX algorithm we used TensorFlow 1.14; for the regression based method we used the \texttt{scipy.optimize} package with the ``Newton-CG" method; kernel density estimation was implemented using the \texttt{density} function from the \texttt{stats} library in R, imported to Python through the \texttt{rpy2} package.

\end{document}